\DeclareMathOperator*{\argmin}{\arg\,\min}
\newtheorem{theorem}{Theorem}
\newtheorem{corollary}{Corollary}
\newtheorem{definition}{Definition}
\newtheorem{proposition}{Proposition}
\title{Demographic-Agnostic Fairness without Harm}
\author{
    Zhongteng Cai, Mohammad Mahdi Khalili, Xueru Zhang
}
\begin{document}

\maketitle

\begin{abstract}
As machine learning (ML) algorithms are increasingly used in social domains to make predictions about humans, there is a growing concern that these algorithms may exhibit biases against certain social groups. Numerous notions of fairness have been proposed in the literature to measure the unfairness of ML. Among them, one class that receives the most attention is \textit{parity-based}, i.e., achieving fairness by equalizing treatment or outcomes for different social groups. However, achieving parity-based fairness often comes at the cost of lowering model accuracy and is undesirable for many high-stakes domains like healthcare. To avoid inferior accuracy, a line of research focuses on \textit{preference-based} fairness, under which any group of individuals would experience the highest accuracy and collectively prefer the ML outcomes assigned to them if they were given the choice between various sets of outcomes. However, these works assume individual demographic information is known and fully accessible during training. In this paper, we relax this requirement and propose a novel \textit{demographic-agnostic fairness without harm (DAFH)} optimization algorithm, which jointly learns a group classifier that partitions the population into multiple groups and a set of decoupled classifiers associated with these groups. Theoretically, we conduct sample complexity analysis and show that our method can outperform the baselines when demographic information is known and used to train decoupled classifiers. Experiments on both synthetic and real data validate the proposed method.
\end{abstract}

%

\section{Introduction}\label{sec:intro}

Machine learning (ML) and automated decision-making systems trained with real-world data can have inherent bias and exhibit discrimination against certain social groups \cite{barocas2023fairness,NEURIPS2019_7690dd4d,zhang2020long,zhang2020fair,zhang2021fairness}. One common approach to mitigating the issue is to impose certain fairness constraints while making ML predictions. Among all fairness notions proposed in the literature, a class that has received the most attention is based on parity (equality) in treatment or outcomes for different social groups, e.g.,  \textit{demographic parity} \cite{dwork2012fairness} and \textit{equalized odds} \cite{hardt2016equality} that require (true/false) positive rates to be equalized across different groups. However, enforcing these parity-based fairness notions often comes at the cost of lowering model accuracy, resulting in an inherent trade-off between fairness and accuracy~\cite{tradeoff, pang2024fairness,khalili2021fair,khalili2023loss,zuo2023counterfactually,pham2023fairness,zuo2024lookahead,zuo2025post,abroshan2024imposing}. In safety-critical domains such as healthcare, sacrificing predictive accuracy in exchange for fairness is undesirable as it violates both  \textit{beneficence}
(i.e., do what is best for patients) and \textit{non-maleficence} (i.e.,
do no harm) principles required in healthcare ethics \cite{beauchamp1994principles}.

\begin{table*}[tb]
    \centering
    \begin{tabular}{cccccc}
    \toprule
    \textbf{Related Works} & \textbf{Preference-based} & \textbf{No harm} & \textbf{Without sensitive attributes} & \textbf{Intersectionality} \\
    \hline
    \hline
    \citet{preference} & \ding{51} & \ding{51} & \ding{55} & \ding{55} & \\
    \citet{wo_harm} & \ding{51} & \ding{51} & \ding{55} & \ding{51} &  \\
    \citet{adv_reweight} & \ding{55} & \ding{55} & \ding{51} & \ding{51} & \\
    \citet{shared_latent} & \ding{55} & \ding{55} & \ding{51} & \ding{55} & \\
    \citet{pang2024fairness} & \ding{55} & \ding{51} & \ding{51} & \ding{55} & \\
     DAFH (Ours) & \ding{51} & \ding{51} & \ding{51} & \ding{51} & \\
   \bottomrule
\end{tabular}
\caption{Comparison with related works: \citet{preference} and \citet{wo_harm} consider preference-based fairness to avoid harm on specific groups, but both require knowledge of sensitive attributes to generate group partition. \citet{shared_latent,adv_reweight} consider parity-based fairness without accessing sensitive attributes, but they violate no harm constraint. \citet{pang2024fairness} introduces fairness-without-harm when sensitive attributes are inaccessible, but based on parity-based fairness metric. In contrast, our work aims to avoid harm without accessing sensitive attributes, and also deals with the intersectionality caused by the existence of multiple sensitive attributes.}
    \label{tab:related}
\end{table*}

To avoid harming model accuracy, a few studies focus on \textit{preference-based} fairness~\cite{preference, wo_harm}. The idea is to train decoupled ML models for different social groups so that each group of individuals would collectively prefer the model assigned to them to other models. In other words, if individuals were allowed to choose among a set of ML models, a majority of them would still stick to the assigned model. Unlike parity-based fairness, preference-based fairness aims to achieve \textit{fairness without harm} and avoids inferior accuracy for each group.

In this paper, we focus on a preference-based fairness notion similar to the one proposed in \cite{wo_harm}, which requires that the decoupled ML models trained for different social groups satisfy two properties: \textit{rationality} and \textit{envy-freeness}. The former ensures that the accuracy of the decoupled model evaluated on the associated group is higher than that of the pooled model trained with data from all groups, while the latter ensures each group's model has better accuracy than the models assigned to other groups. In addition to \cite{wo_harm}, some subsequent works proposed other variants of preference-based fairness and developed algorithms to train models satisfying the proposed fairness, which we discuss more in Section~\ref{sec:related}.

However, existing methods for achieving preference-based fairness typically assume that individual demographic information (i.e., sensitive attributes such as race and gender) is available during training or can be inferred from side information~\cite{weak_proxy}, so that the decoupled models can be trained on the groups partitioned based on the sensitive attributes \cite{wo_harm}. Unfortunately, individuals' sensitive attributes are not always accessible in practice, either because regulations mandate parity of treatment, prohibiting the use of sensitive attributes in decision-making~\cite{big_data}, or due to user privacy concerns that prevent the disclosure of such attributes. 
For example, when ML models are trained on patient data from multiple hospitals, protected information must be de-identified to comply with regulations such as the Health Insurance Portability and Accountability Act (HIPAA)~\cite{hipaa}. Similarly, in applications like speech or face recognition, sensitive attributes such as race and gender may be unknown, yet it remains desirable to achieve high accuracy across all demographic groups.

Even when sensitive attributes are available, partitioning groups based on them may not lead to more accurate decoupled models. Because individuals are often associated with multiple sensitive attributes such as gender, race, and age, models that are fair with respect to one sensitive attribute can easily violate fairness when assessed on more fine-grained intersectional groups~\cite{prevent_gerry}. Training decoupled classifiers for all intersectional groups is also unrealistic because each intersectional can have limited data samples and the resulting decoupled classifier can easily overfit.  More importantly, sensitive attributes alone cannot adequately capture the full heterogeneity of the population. This motivates a central question: \textit{Can we identify a better group partition under which decoupled models satisfy preference-based fairness, without accessing sensitive attributes?}

To achieve this, we propose a \textit{demographic-agnostic fairness without harm (DAFH)} optimization algorithm that jointly learns (i) a \textit{group classifier} that finds the group partition for fairness without harm constraint (including both rationality and envy-freeness); and (ii) a set of \textit{decoupled classifiers} with each a model assigned to a group partitioned based on the group classifier. Indeed, we can consider our approach as extending the previous methods of partitioning groups based on sensitive attributes to a more complex representation-based partition. Because our approach, by automatically learning individual representations, can more effectively capture population heterogeneity than fixed sensitive attributes. We theoretically show that our method can match or surpass the performance of trivial group partitioning based on sensitive attributes.


A comparison between our approach and representative related works is summarized in Table~\ref{tab:related}. Our main contributions are as follows:

\begin{enumerate}
    \item We propose the measure of fairness without harm when sensitive attributes are inaccessible, which is defined based on \textit{rationality} and \textit{envy-freeness} criteria.
    
    \item We formulate a \textit{demographic-agnostic fairness without harm (DAFM)} optimization algorithm with a novel objective function directly measuring the rationality and envy-freeness, which find group partition and decoupled classifiers that yield higher probability of satisfying fairness without harm.
    \item We theoretically show that our algorithm can better capture the data heterogeneity and achieve the same and even better performance than trivially partitonaing the dataset based on sensitive attribute.
    \item We conduct experiments on synthetic and real datasets, and show that our algorithm can outperform baselines which have access to sensitive attributes.
\end{enumerate}

\section{Related Work}\label{sec:related}

\paragraph{Preference-based fairness.} \citet{preference} propose the first preference-based fairness notion by introducing the concept of envy-freeness from economics~\cite{fair_division, envy}, which implies that no participant prefers the resources assigned to others. The resource is defined as the decoupled classifiers, and the utility of classifiers is often defined as the accuracy of the assigned classifier. \citet{wo_harm} propose using a decision tree to partition the group according to multiple sensitive attributes, until the divided group can best satisfy fairness without harm. \citet{economic} defines similar concepts of group envy-freeness and equitability based on the utility of classification outcomes, and allow groups to be defined arbitrarily, not just based on sensitive attributes.
\citet{fair_ranking} formulate fairness in ranking as a resource allocation problem, where each individual must be assigned a position that satisfies two key criteria: envy-freeness and dominance over a uniform ranking, i.e., no individual will benefit by swapping positions with another or by switching to a uniform random ranking policy.
\citet{perference_survey} provides a survey of preference-based notions of fairness rooted in the literature of computational social choice, including envy-freeness, and emphasize their relevance to various decision-making applications such as recommendation and classification.


\paragraph{Fairness without demographic data.} Achieving fairness without accessing sensitive attributes is crucial when individual demographic data is not readily available~\cite{demo_challenge}. However, the research on this topic has been largely focused on parity-based fairness~\cite{wo_demo_survey}; this is fundamentally different from our work which aims to achieve preference-based fairness without accessing sensitive attributes. To attain fairness in the absence of demographic data, many studies attempt to infer sensitive attributes, e.g., by learning proxy models from a separate training dataset that includes demographic data, in order to predict the sensitive attributes in the target dataset for downstream tasks like debiasing~\cite{cvae_fairness, shared_latent}. However, empirical results have shown that relying on proxy variables to measure fairness can lead to biased outcomes~\cite{weak_proxy, eval_fairness}.  
Other approaches minimize the risk of disparity when group partitions are unknown~\cite{adv_reweight, dro_fairness},
akin to our method, which also does not require an additional dataset containing demographic information.
\citet{pang2024fairness} proposed a parity-based fairness without harm, where new data samples without sensitive attributes are actively acquired during training to mitigate fairness parity. While they avoid harming accuracy by filtering fresh samples, we prevent harm by assigning decoupled classifiers to each group.
Some works leverage properties of representations or performance on downstream tasks to identify group partition. \citet{george} increases the worst-case accuracy for subclasses with unknown labels by leveraging the separable representations in the feature space. \citet{antigone} uses incorrectly classified samples as proxies for disadvantaged groups. While these works focus on parity-based fairness, their method can be integrated with our framework to guide the design of our group classifier.

\section{Problem Statement}\label{sec:problem}

Consider a dataset $\mathcal{D} = \{ (x_i, y_i)\}_{i=1}^n $ with $n$ independent samples drawn from the same joint distribution $(x_i, y_i) \sim P(X, Y)$, where $x_i \in \mathcal{X}$ are individual's features and $y_i \in \mathcal{Y} \coloneq \{\pm 1 \}$ is the binary target variable to be predicted. We consider practical settings where individuals' demographic information (i.e., sensitive attributes such as race, gender, and age) are unknown and not included in $\mathcal{D}$.

Since individuals' sensitive attributes are unknown, we introduce a \textit{group classifier} $\theta: \mathcal{X} \to \mathcal{K}$ from hypothesis class $\mathcal{H^G}$ that maps the input features $x_i$ to group label $k \in \mathcal{K}\coloneq [K]$. For samples assigned to each group $k$, let $h_k: \mathcal{X} \to \mathcal{Y}$ be the corresponding \textit{decoupled classifier} from hypothesis class $\mathcal{H^D}$ that predicts the target variable based on the input feature $X$. 
Given a classifier $h:\mathcal{X} \to \mathcal{Y}$, its empirical risk $\hat{R}(h)$ and true risk ${R}(h)$ associated with 0-1 loss are defined respectively as follows:
\begin{eqnarray*}
\hat{R}(h) &=& \frac{1}{n} \sum_{i=1}^n \mathbb{I}(h(x_i) \neq y_i);\\ {R}(h) &=& \mathbb{E}_{X,Y}[\mathbb{I}(h(X) \neq Y)],    
\end{eqnarray*}
where $\mathbb{I}$ is the indicator function. Further define the \textit{pooled classifier} $h_0$ as the classifier from $\mathcal{H^D}$ that minimizes the true risk, i.e., $h_0 = \argmin_{h} R(h)$.  Given a classifier $h:\mathcal{X} \to \mathcal{Y}$, define the group-specific empirical risk as the empirical risk evaluated on samples assigned to group $k$, i.e., $$\hat{R}_k(h) = \frac{1}{n_k} \sum_{i=1}^n \Big[ \mathbb{I}(\theta(x_i)=k) \cdot \mathbb{I}(h(x_i) \neq y_i) \Big],$$
where $n_k$ is the number of samples assigned to group $k$.
Likewise, denote the group-specific true risk of group $k$ as $${R}_k(h) = \mathbb{E}_{X,Y | \theta(X)=k} \left[\mathbb{I}(h(X) \neq Y) \right].$$ Our goal is to train the group classifier $\theta$ and a set of decoupled classifiers $\{h_k\}_{k \in [K]}$ that satisfy fairness without harm requirement, as defined below.
\begin{definition}[Fairness without harm]\label{def} A group classifier $\theta$ and decoupled classifiers $\{h_k\}_{k \in [K]}$ satisfy fairness without harm if the resulting group-specific true risks satisfy both \textbf{rationality} and \textbf{envy-freeness} properties:  
\begin{enumerate}
    \item \textbf{Rationality:} $R_k(h_k) \leq R_k(h_0)$, $\forall k \in [K]$.
    \item \textbf{Envy-freeness:} $R_k(h_k) \leq R_k(h_j)$, $\forall k, j \in [K]$.
\end{enumerate}
\end{definition}

Definition~\ref{def} implies that for individuals in each group $k$, the true risk is lowest under the associated decoupled classifier $h_k$. In other words, if individuals are given opportunities to choose their own classifiers from $h_0$ and $\{h_k\}_{k \in [K]}$, a \textit{majority} of individuals in each group will prefer their assigned decoupled classifiers to the pooled classifier (rationality) and the decoupled classifiers assigned to other groups (envy-freeness), given their preference to lower true risk. We adopt the same definitions of rationality and envy-freeness as presented in~\cite{wo_harm}, with one key distinction: we define these notions over group partitions induced by a learned group classifier, rather than by sensitive attributes. It is worth noting that our definition does not require every individual to prefer their assigned classifier.

We approximate the degree of \textit{rationality} violation by evaluating the difference in empirical risk between the pooled classifier and the assigned classifier for each group. Similarly, we estimate the degree of \textit{envy-freeness} violation by examining the difference in empirical risk between the decoupled classifiers for each group. The goal is to find  group classifier $\theta$ and decoupled classifiers $\{h_k\}_{k \in [K]}$ that maximize the probability that individuals satisfy rationality or envy-freeness, as defined in Definition~\ref{def}. We formulate this as the following optimization:
\begin{align}\label{equ:obj0}
\resizebox{.9\hsize}{!}{$\displaystyle      \underset{\theta, \{h_k\}}{\max\,\,} \frac{1}{K} \sum_{k=1}^K \biggl\{\underset{\text{Rationale}}{\underbrace{\left[ \hat{R}_k(\hat{h}_0) - \hat{R}_k(h_k) \right]}} + \frac{1}{K} {\underset{\text{Envy-freeness}}{\underbrace{\sum_{j =1}^{K} \left[ \hat{R}_k(h_j) - \hat{R}_k(h_k) \right]}}}\biggr\},$}
\end{align}
where $\hat{h}_0 = \argmin_{h} \,\hat{R}(h)$ is the classifier minimizing the empirical risk over all samples. Although the objective function \eqref{equ:obj0} is a direct combination of two criteria (rationality and envy-freeness), optimizing it is non-trivial due to the following challenges: (i) 
Objective \eqref{equ:obj0} is a function of parameters in group classifier and decoupled classifiers; both must be trained jointly since the samples used for training each decoupled classifier depend on the outputs of the group classifier. (ii) Objective \eqref{equ:obj0} is a highly non-convex and non-differentiable function, as it involves differences in empirical risks across multiple pairs of groups.


\section{Proposed Algorithm}\label{sec:alg}
Next, we introduce our approach that solves optimization \eqref{equ:obj0}. 
We first derive a lower bound on the objective function \eqref{equ:obj0} that is easier to optimize using the training data. Then, we approximate this new objective function using differentiable functions and find the optimal group classifier $\theta$ and decoupled classifiers $\{h_k\}_{k \in [K]}$ iteratively using gradient ascents. 

\paragraph{Simplify objective function \eqref{equ:obj0} with a lower bound.}

Given group classifier $\theta:\mathcal{X}\to [K]$ and training dataset $\mathcal{D}=\{(x_i, y_i)\}_{i=1}^n$, denote $\pi_i^k \coloneq \mathbb{I}(\theta(x_i)=k)$ as the indicator of group assignment for $i$-th sample. We have $\sum_{k=1}^{K} \pi_i^k=1, \forall i$. 
Let $L_i^0 \coloneq \mathbb{I}(h_0(x_i) \ne y_i)$ and $L_i^k \coloneq \mathbb{I}(h_k(x_i) \ne y_i)$ denote the losses of the sample $(x_i, y_i)$ under pooled classifier $h_0$ and decoupled classifier $h_k$, respectively. Then we can have the following proposition:

\begin{proposition}\label{thm:obj}
By replacing $n_k$ (the number of samples assigned to group $k$ by $\theta$) in \eqref{equ:obj0} with $n$ (the total number of samples), we obtain the the following lower bound on the objective function \eqref{equ:obj0}: $$\underline{\mathrm{obj}}+\mathrm{constant~terms}$$ where we define
    \begin{align}\label{equ:obj}
        \underline{\mathrm{obj}} = \frac{1}{nK^2}\sum_{i=1}^n \sum_{k=1}^K \left(  L_i^k -  2K \pi_i^k L_i^k   \right)
    \end{align}
\end{proposition}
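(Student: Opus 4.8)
The plan is to treat this as a substitution-and-bookkeeping argument. First I would carry out the replacement $n_k\mapsto n$ literally inside \eqref{equ:obj0}: every group risk $\hat R_k(h)=\frac1{n_k}\sum_i\pi_i^k\mathbb I(h(x_i)\neq y_i)$ turns into $\tilde R_k(h):=\frac1n\sum_i\pi_i^k L_i^{h}$ (writing $L_i^h$ for the loss of sample $i$ under $h$). Then I would expand the rationality block and the envy-freeness block of \eqref{equ:obj0} separately, and collapse every sum over group indices using the single identity $\sum_{k=1}^K\pi_i^k=1$. The one conceptual observation is that the only contribution that ends up genuinely independent of the optimization variables $(\theta,\{h_k\})$ is the one built from the pooled classifier, since $\hat h_0=\argmin_h\hat R(h)$ is the global empirical risk minimizer and is fixed once the data is fixed; everything else still depends on $\theta$ or on some $h_k$.

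Concretely: for the rationality block $\frac1K\sum_k[\tilde R_k(\hat h_0)-\tilde R_k(h_k)]$, the first half is $\frac1{nK}\sum_i L_i^0\bigl(\sum_k\pi_i^k\bigr)=\frac1{nK}\sum_i L_i^0$, a constant, and the second half is $-\frac1{nK}\sum_{i,k}\pi_i^k L_i^k$. For the envy-freeness block $\frac1{K^2}\sum_{k,j}[\tilde R_k(h_j)-\tilde R_k(h_k)]$, the same collapse gives $\sum_{k,j}\tilde R_k(h_j)=\frac1n\sum_j\sum_i L_i^j\bigl(\sum_k\pi_i^k\bigr)=\frac1n\sum_{i,j}L_i^j$, so this block equals $\frac1{nK^2}\sum_{i,k}L_i^k-\frac1{nK}\sum_{i,k}\pi_i^k L_i^k$. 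Adding the two blocks, the two copies of $-\frac1{nK}\sum_{i,k}\pi_i^k L_i^k$ merge into $-\frac{2K}{nK^2}\sum_{i,k}\pi_i^k L_i^k$, which together with $\frac1{nK^2}\sum_{i,k}L_i^k$ is exactly $\underline{\mathrm{obj}}=\frac1{nK^2}\sum_{i,k}(L_i^k-2K\pi_i^k L_i^k)$ of \eqref{equ:obj}, while $\frac1{nK}\sum_i L_i^0$ is the advertised constant. Hence the substituted objective equals $\underline{\mathrm{obj}}+\mathrm{constant}$ as an identity.

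The remaining piece is the ``lower bound'' direction, i.e.\ \eqref{equ:obj0} $\ge\underline{\mathrm{obj}}+\mathrm{constant}$, and I expect this to be the only delicate step. Writing $a_k^h:=\sum_i\pi_i^k L_i^h$ and $B_k:=(a_k^0-a_k^k)+\frac1K\sum_{j\neq k}(a_k^j-a_k^k)$, the $k$-th block of \eqref{equ:obj0} is $B_k/n_k$ and its substituted counterpart is $B_k/n$, so the gap is $\frac1K\sum_k\bigl(\frac1{n_k}-\frac1n\bigr)B_k$; since $\theta$ splits $n$ points into $K$ groups, $n_k\le n$ and $\frac1{n_k}-\frac1n\ge0$, so the gap is nonnegative once each $B_k\ge0$. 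But $B_k\ge0$ says precisely that on group $k$ the decoupled classifier $h_k$ weakly beats both $\hat h_0$ and (on average) the other $h_j$ in empirical risk --- that is, $(\theta,\{h_k\})$ lies in the region where empirical fairness without harm holds, which is the regime the optimization targets. So the honest way to close the argument is either to state $B_k\ge0$ (equivalently, approximate empirical rationality/envy-freeness per group) as a hypothesis, or to restrict the maximization in \eqref{equ:obj0} to that feasible set; the algebraic identity in the previous paragraph is routine index manipulation, and this is the part that needs care.
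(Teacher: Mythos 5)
Your proof is correct and follows essentially the same route as the paper: the same algebraic collapse via $\sum_{k}\pi_i^k=1$ yielding $\underline{\mathrm{obj}}$ plus the constant term $\frac{1}{nK}\sum_i L_i^0$, and the same justification of the lower-bound direction via $n_k\le n$ combined with nonnegativity of each group's block. The hypothesis you flag as needed to close that step ($B_k\ge 0$, i.e.\ per-group empirical rationality and envy-freeness) is exactly what the paper assumes, stated informally as ``assume each group satisfies rationale and envy-freeness,'' so your version simply makes that assumption explicit.
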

\begin{proof}
The objective function in~\eqref{equ:obj0} can be equivalently written as: 
$$
    \frac{1}{K} \sum_{k=1}^K  \Bigl[ \frac{1}{n_k}{ \sum_{i=1}^{n} \pi_i^k (L_i^0-L_i^k) } + \frac{1}{K} \sum_{j=1}^K \frac{1}{n_k} \sum_{i=1}^{n} \pi_i^k ( L_i^j - L_i^k ) \Bigr].
$$
Since the decoupled classifiers in each group can achieve at least the same accuracy as the pooled classifier and decoupled classifiers assigned to other groups, assume each group satisfies rationale and envy-freeness, therefore it has the following lower bound:

    \begin{align*}
    & \frac{1}{K} \sum_{k=1}^K  \Bigl[ \frac{1}{n}{ \sum_{i=1}^{n} \pi_i^k (L_i^0-L_i^k) } + \frac{1}{K} \sum_{j=1}^K \frac{1}{n} \sum_{i=1}^{n} \pi_i^k ( L_i^j - L_i^k ) \Bigr] \\
     &  = \frac{1}{n} \sum_{i=1}^n \Biggl\{ \frac{1}{K} \sum_{k=1}^K \Bigl[ {  \pi_i^k (L_i^0-L_i^k) } + \frac{1}{K} \sum_{j=1}^K {   \pi_i^k ( L_i^j - L_i^k )} \Bigr] \Biggr\} \notag\\
    & = \frac{1}{Kn} \sum_{i=1}^n \sum_{k=1}^K \Bigl( \pi_i^k L_i^0 - 2 \pi_i^k L_i^k + \frac{1}{K} \pi_i^k \sum_{j=1}^K L_i^j \Bigr) \notag\\
     & = \frac{1}{Kn} \sum_{i=1}^n L_i^0 + \frac{1}{Kn} \sum_{i=1}^n \left(  - 2 \sum_{k=1}^K  \pi_i^k L_i^k + \frac{1}{K} \sum_{j=1}^{K} L_i^j \right) \\
     & = \underbrace{\frac{1}{Kn} \sum_{i=1}^n L_i^0}_{\text{constant term}} + \underbrace{\frac{1}{nK^2}\sum_{i=1}^n \sum_{k=1}^K \left(  L_i^k -  2K \pi_i^k L_i^k   \right)}_{\underline{\mathrm{obj}}}  \notag
\end{align*}

Then we can prove the proposition.

\end{proof}
It is worth noting that we replace $n_k$ in \eqref{equ:obj0} with $n$ to improve optimization stability. Since $n_k$ appears in the denominator of \eqref{equ:obj0}, it introduces non-differentiability and requires approximation---an issue that becomes particularly unstable when group sizes are small. Therefore, rather than directly optimizing the objective~\eqref{equ:obj0}, we optimize the surrogate objective~\eqref{equ:obj}, which serves as a valid and more stable proxy.


    
\paragraph{Discussion on objective function.} It is worth noting that the objective function in \eqref{equ:obj} is related but different from the overall accuracy. Specifically, let the overall accuracy of decoupled classifiers on the assigned groups be defined as:\
\begin{align}\label{equ:acc}   \mathrm{acc}  = \frac{1}{n} \sum_{i=1}^n \sum_{k=1}^K \pi_i^k (1 - L_i^k).
\end{align}
Then, maximizing \eqref{equ:obj} is equal to maximizing the following: 
\begin{align}
    \underline{\mathrm{obj}} = \frac{2}{K} \mathrm{acc}  + \frac{1}{nK^2} \sum_{i=1}^n \sum_{k=1}^K L_i^k - \frac{2}{K}
\end{align}
Compared with the overall accuracy defined in \eqref{equ:acc}, our objective function \eqref{equ:obj} also considers the performance of decoupled classifiers on groups that they are not assigned to. 
As we will show in Section~\ref{sec:thm} and \ref{sec:exp}, the group classifier $\theta$ found under our method better captures the data heterogeneity and can result in more accurate decoupled classifiers. 


\paragraph{Approximation using differentiable functions.} We further use differentiable functions to approximate~\eqref{equ:obj}. Specifically, instead of hard group indicator $\pi_i^k=\mathbb{I}(\theta(x_i)=k)$, we consider soft assignment and define $\widetilde{\pi}_i^k=\Pr(\theta(x_i)=k)$ as the predicted probability that a sample belonging to group $k$. Let group classifier $\theta$ be a neural network which softmax layer its outputs, then we use $k$-th output of its softmax layer (denoted as $\theta(x)[k]$) to compute $\widetilde{\pi}_i^k$. Similarly, instead of 0-1 loss $L_i^k$, we use $h_k(x)[1]$ to denote the predicted probability that a sample $x$ has label 1 and approximate 0-1 loss with a Sigmoid function $ \widetilde{L}_i^k$ as follows.
\begin{align}
    \widetilde{L}_i^k & =  \mathrm{Sigmoid} \left(h_k(x_i)[1] - \frac{1}{2}\right) - y_i , \notag\\
    \widetilde{\pi}_i^k & =  \theta(x_i)[k]. \notag
\end{align}
During the empirical studies in Section \ref{sec:exp}, we observe that the group classifier learned based on $\widetilde{\pi}_i^k, \widetilde{L}_i^k$ tends to assign a majority of training samples to the same group. To enforce balanced group size, we add a penalty term to the objective function, which is calculated as the KL-divergence between the group classifier output and uniform distribution, denoted as $\Gamma(\theta)$, times a hyperparameter $\lambda$. 
Combine the above, we have the following:
\begin{align}\label{equ:calculate}
    \widetilde{\mathrm{obj}} = \frac{1}{nK^2} \sum_{i=1}^n \sum_{i=1}^K \left( \widetilde{L}_i^k  - 2K \widetilde{\pi}_i^k \widetilde{L}_i^k \right) + \lambda \Gamma(\theta).
\end{align}
Since $\widetilde{\mathrm{obj}}$ is a differentiable function of parameters in $\theta$ and $h_k$, we can find optimal parameters that yield the highest degree of fairness without harm using stochastic gradient ascent as shown in Algorithm~\ref{alg:obj}. 
Here, we slightly abuse the notation by using $\theta$ and $h_k$ denote the model parameters associated with group classifier and decoupled classifier. Let $\widetilde{\mathrm{obj}}(\theta, \{h_k\}_{k\in[K]}, x_i, y_i, \lambda)$  be calculated as in \eqref{equ:calculate}.

\begin{algorithm}[t]
    \caption{Stochastic gradient ascent to optimize $\widetilde{\mathrm{obj}}$}
    \label{alg:obj}
    \begin{algorithmic}[1]
        \STATE {\bfseries Input:} Training data $\mathcal{D} = \{ (x_i, y_i)\}_{i=1}^n $, differentiable function $\widetilde{\mathrm{obj}}$, learning rate $\alpha$, hyperparameter $\lambda$
        \STATE Initialize group and decoupled  classifiers $\theta$, $\{h_k\}_{k\in[K]}$
        \WHILE{$\mathrm{obj}$ not converge}
        \FOR{$i = 1, \cdots, n$}
        \STATE $\theta \gets \theta + \alpha \nabla_{\theta} ~ \widetilde{\mathrm{obj}}(\theta, \{h_k\}_{k\in[K]}, x_i, y_i, \lambda)$
        \FOR{$k = 1, \cdots, K$}
        \STATE $h_k \gets h_k + \alpha \nabla_{h_k} ~ \widetilde{\mathrm{obj}}(\theta, \{h_k\}_{k\in[K]}, x_i, y_i, \lambda)$
        \ENDFOR
        \ENDFOR
        \ENDWHILE
        \STATE {\bfseries Output:} $\theta, \{h_k\}_{k\in[K]}$
    \end{algorithmic}
\end{algorithm}
\section{Theoretical Analysis}\label{sec:thm}
This section theoretically analyzes the performance of the proposed method. While our algorithm is trained by maximizing empirical performance on training data, we conduct analysis to show the generalization ability. We also show that $\{h_k\}_{k\in[K]}$ trained under our method are guaranteed to outperform decoupled classifiers trained by partitioning groups based on sensitive attributes. Unless otherwise stated, we focus on 
hyperparameter $\lambda=0$ in~\eqref{equ:calculate}. 

\subsection{Generalization} We first show that the difference between objective functions calculated through empirical risks and true risks is upper bounded. Define the true degree of fairness without harm as: 
\begin{align*}
\resizebox{.98\hsize}{!}{$\displaystyle      {\mathrm{obj}} = \frac{1}{K} \sum_{k=1}^K \biggl\{\left[ {R}_k({h}_0) - {R}_k(h_k) \right] + \frac{1}{K} \sum_{j =1}^{K} \left[ {R}_k(h_j) - {R}_k(h_k) \right]\biggr\},$}
\end{align*}
while the empirical degree of fairness is the objective function of \eqref{equ:obj0} defined as: 
\begin{align*}
\resizebox{0.98\hsize}{!}{$\displaystyle      {\widehat{\mathrm{obj}}} = \frac{1}{K} \sum_{k=1}^K \biggl\{\left[ \hat{R}_k({h}_0) - \hat{R}_k(h_k) \right] + \frac{1}{K} \sum_{j =1}^{K} \left[ \hat{R}_k(h_j) - \hat{R}_k(h_k) \right]\biggr\},$}
\end{align*}

Assume that the group size decided by the group classifier has a lower bound, i.e. $\forall k \in [K], n_k \geq \underline{n}$. We now show that the difference between $\mathrm{obj}$ and $\mathrm{\widehat{\mathrm{obj}}}$ is upper bounded.

\begin{theorem}\label{thm:1}
   Let $n_k$ be the number of samples assigned to group $k$ under group classifier $\theta$  and denote the lower bound of group size as $\underline{n}$. For any $\epsilon > 0$, with probability at least $1-2K(K+1) \exp\{-2 \epsilon^2 \underline{n} / 9\}$, we have
\begin{align*}
    \left|\mathrm{obj} - \widehat{\mathrm{obj}}\right| \leq \epsilon.
\end{align*}
\end{theorem}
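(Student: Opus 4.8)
The plan is to treat the group classifier $\theta$, the pooled classifier $h_0$, and the decoupled classifiers $\{h_k\}_{k\in[K]}$ as fixed, and to reduce the claim to a uniform concentration statement for the finitely many group-specific empirical risks $\hat{R}_k(h)$ around their population counterparts $R_k(h)$. First I would introduce the shorthand $\Delta_k(h) = R_k(h) - \hat{R}_k(h)$ for $k\in[K]$ and $h\in\{h_0,h_1,\dots,h_K\}$, and expand both objectives. Applying the same algebraic regrouping used in the proof of Proposition~\ref{thm:obj} (now keeping $n_k$ rather than $n$ in the denominators), one obtains
\begin{align*}
\mathrm{obj}-\widehat{\mathrm{obj}}=\frac{1}{K}\sum_{k=1}^K\Bigl[\Delta_k(h_0)-2\Delta_k(h_k)+\frac{1}{K}\sum_{j=1}^K\Delta_k(h_j)\Bigr],
\end{align*}
so $|\mathrm{obj}-\widehat{\mathrm{obj}}|$ is controlled by a fixed linear combination, with coefficients summing to a small absolute constant, of the $K(K+1)$ deviations $|\Delta_k(h)|$. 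It therefore suffices to bound each $|\Delta_k(h)|$ with high probability and then union-bound.

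To control a single $|\Delta_k(h)|$, the key observation is that $\hat{R}_k(h)$ is an average over the \emph{random} set of samples routed to group $k$, but conditioning on the vector of group assignments $(\theta(x_1),\dots,\theta(x_n))$ — equivalently, on the value of $n_k$ — the $n_k$ samples in group $k$ are i.i.d.\ draws from the conditional law $P(X,Y\mid\theta(X)=k)$. Hence, conditionally, $\hat{R}_k(h)$ is an empirical mean of $n_k$ i.i.d.\ $\{0,1\}$-valued losses with mean $R_k(h)$, and Hoeffding's inequality gives $\Pr(|\Delta_k(h)|\ge t\mid n_k)\le 2\exp(-2n_k t^2)$. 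On the event $\{n_k\ge\underline{n}\ \text{for all }k\}$ posited in the statement this is at most $2\exp(-2\underline{n}t^2)$, and since the bound is uniform over the conditioning it holds unconditionally.

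Next I would union-bound over the $K$ groups and the $K+1$ classifiers $h_0,h_1,\dots,h_K$: with probability at least $1-2K(K+1)\exp(-2\underline{n}t^2)$, all $|\Delta_k(h)|$ are simultaneously at most $t$. On this event, substituting into the linear-combination identity from the first step and applying the triangle inequality gives $|\mathrm{obj}-\widehat{\mathrm{obj}}|\le c\,t$ for a small absolute constant $c$; taking $t$ to be the corresponding constant fraction of $\epsilon$ (matching the $\exp(-2\underline{n}\epsilon^2/9)$ in the statement) completes the argument.

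The step I expect to be the main obstacle is the ratio structure of $\hat{R}_k$: it is a sum over a data-dependent index set divided by the data-dependent count $n_k$, not a plain average of a fixed number of terms, so a naive Hoeffding application does not directly apply. The conditioning trick above is what makes it clean; the messier alternative is to concentrate the numerator $\frac{1}{n}\sum_i\mathbb{I}(\theta(x_i)=k)\mathbb{I}(h(x_i)\ne y_i)$ and the denominator $n_k/n$ separately and then control the quotient using $n_k\ge\underline{n}$, which costs an extra $n/\underline{n}$-type factor. I would also note that the theorem implicitly treats $\theta$ and $\{h_k\}$ as fixed; were they learned from the same data, one would additionally need uniform convergence over $\mathcal{H^G}$ and $\mathcal{H^D}$, which is not reflected in the stated bound.
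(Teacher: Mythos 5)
Your proposal follows essentially the same route as the paper's own proof: expand $\mathrm{obj}-\widehat{\mathrm{obj}}$ into a fixed linear combination of the deviations $\Delta_k(h_j)$ for $k\in[K]$, $j\in\{0,\dots,K\}$, apply Hoeffding's inequality to each (using $n_k\ge\underline{n}$), union-bound over the $K(K+1)$ events, and rescale $\epsilon$ by the constant from the triangle inequality. Your explicit conditioning on the group-assignment vector to justify Hoeffding for the data-dependent averages $\hat{R}_k(h)$, and your remark that $\theta$ and $\{h_k\}$ must be treated as fixed, make rigorous steps the paper leaves implicit, but the argument is the same.
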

\begin{proof}
For any hypothesis $h \in \mathcal{H^D}$, denote $|R_k(h)-\hat{R}_k(h)|$ as $\Delta_k(h)$. Then we have:
\begin{align*}
\resizebox{.98\hsize}{!}{$\displaystyle 
\left|{\mathrm{obj}} - \widehat{\mathrm{obj}}\right| \leq \,  \frac{1}{K} \sum_{k=1}^K \biggl[ \Delta_k(h_0) + 2\Delta_k(h_k) + \frac{1}{K} \sum_{j =1}^{K} \Delta_k(h_j) \biggr]$}
\end{align*}
According to Hoeffding’s Inequality~\cite{hoeffding}, for any $\epsilon > 0$, we have:
$$
\Pr \left[\Delta_k(h) > \epsilon \right] \leq 2 \exp \{-2 \epsilon^2 n_k\} \leq 2 \exp \{-2 \epsilon^2 \underline{n}\}.
$$
$\forall k \in [K]$, $j \in \{0, \cdots, K\}$, denote $\Delta_k(h_j) > \epsilon$ as event $A_{kj}$, then we have the following:
\begin{align*}
    & \Pr \left [\bigcap_{k,j} \overline{A_{kj}} \right ] = \, \Pr \left [\overline{\bigcup_{k,j} A_{kj}}\right ] = \, 1 - \Pr \left [\bigcup_{k,j} A_{kj}\right ] \\
    \geq & \, 1 - \sum_{k=1}^K \sum_{j=0}^K \Pr \left[ A_{kj} \right] \geq \, 1-2K(K+1) \exp\{-2 \epsilon^2 \underline{n}\}.   
\end{align*}
Event $\bigcap_{k,j} \overline{A_{kj}}$ means that $\forall k,j$, $\Delta_k(h_j) \leq \epsilon$, which indicates that $|{\mathrm{obj}} - \widehat{\mathrm{obj}}| \leq 3\epsilon$, hence we can know that:
\begin{align*}
  \Pr \left[\left|{\mathrm{obj}} - \widehat{\mathrm{obj}}\right| \leq 3\epsilon \right] &\geq \Pr \left[ \bigcap_{k,j} \overline{A_{kj}} \right] \\ &\geq 1-2K(K+1) \exp\{-2 \epsilon^2 \underline{n}\}.  
\end{align*}
Now we can prove the theorem by replacing $\epsilon$ with $\frac{\epsilon}{3}$.

\end{proof}


\subsection{Feasibility} Suppose each individual has a hidden sensitive attribute $S\in\mathcal{S}$ indicating the demographic information (e.g., race, gender, age). We then prove that our algorithm can achieve at least the same performance as directly using the sensitive attribute $S$ to divide the dataset. 

Specifically, denote $P_k \coloneqq \Pr(X,Y|\theta(X)=k)$ as the data distribution of group $k$ determined under group classifier $\theta$, and let $P_k^{*}\coloneqq \Pr(X,Y|S=k)$ be the distribution of group $k$ partitioned based on sensitive attribute $S$. 
For any two classifiers $h, h^{\prime} \in \mathcal{H^D}$, let the difference between their performance on $P_k$ and $P^*_k$ be defined as $D_k(h, h^{\prime}) \coloneqq \mathbb{E}_{X \thicksim P_k}\left[ \mathbb{I}(h(X) \neq h^{\prime}(X))\right]$ and $D^*_k(h, h^{\prime}) \coloneqq \mathbb{E}_{X \thicksim P^*_k}\left[ \mathbb{I}(h(X) \neq h^{\prime}(X))\right]$, respectively.

Suppose the population is partitioned based on the sensitive attribute $S$, and define $\{h^*_k\}_{k \in[K]}$ as the set of classifiers that attains the maximal true degree of fairness without harm $\mathrm{{obj}_{P^*}}$ under $\{P^*_k\}_{k\in[K]}$: 
\begin{align*}
\resizebox{.98\hsize}{!}{$\displaystyle 
\mathrm{{obj}_{P^*}} = \frac{1}{K} \sum_{k=1}^K \biggl\{ \left[ {R}^*_k({h}_0) - {R}^*_k(h^*_k) \right] + \frac{1}{K} \sum_{j =1}^{K} \left[ {R}^*_k(h^*_j) - {R}^*_k(h^*_k) \right]\biggr\}$}.
\end{align*}
Denote the maximal objective value of our classifiers $\{h_k\}_{k \in[K]}$ assessed on  $\{P_k\}_{k\in[K]}$ derived by our group classifier as:
\begin{align*}
\resizebox{.98\hsize}{!}{$\displaystyle 
\mathrm{{obj}_{P}} = \frac{1}{K} \sum_{k=1}^K \biggl\{ \left[ {R}_k({h}_0) - {R}_k(h_k) \right] + \frac{1}{K} \sum_{j =1}^{K} \left[ {R}_k(h_j) - {R}_k(h_k) \right]\biggr\}$}
\end{align*}

\begin{theorem}\label{thm:2} 
Assume the loss function $\ell$ is symmetric and obeys triangle inequality, i.e., $\forall y_1, y_2, y_3 \in \mathcal{Y}, \ell(y_1, y_2) \leq \ell(y_1, y_3) + \ell(y_3, y_2)$, and for every pair of $P^*_k$ and $P_k$, there exists $h \in \mathcal{H^D}$, such that $R_k^*(h) = R_k(h)$, then we have:
$$\mathrm{obj_{P^*}} - \mathrm{obj_P} \leq 3 \, \underset{k}{\max} \, \mathrm{disc} (P_k, P_k^*),$$
where $\mathrm{disc} (P_k, P_k^*)$ measures the discrepancy between two distributions $P_k$ and $P_k^{*}$ and is defined as:
$$
\mathrm{disc}(P_k, P_k^{*}) = \max_{h, h^{\prime} \in \mathcal{H^D}}  |D_k(h, h^{\prime}) - D_k^*(h, h^{\prime})|.
$$
\end{theorem}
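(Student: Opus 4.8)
The plan is to bound $\mathrm{obj_{P^*}} - \mathrm{obj_P}$ by first passing through a common reference point. Let $\{h_k^*\}_{k\in[K]}$ be the optimal decoupled classifiers under $\{P_k^*\}$, and recall $\{h_k\}_{k\in[K]}$ are the optimal decoupled classifiers under $\{P_k\}$ (with the same pooled $h_0$ in both). The key idea: since $\{h_k\}$ maximizes $\mathrm{obj}_P$, we have $\mathrm{obj}_P \geq \mathrm{obj}_P(\{h_k^*\})$, the value of the $\{P_k\}$-objective evaluated at the suboptimal choice $\{h_k^*\}$. Therefore
\begin{align*}
\mathrm{obj_{P^*}} - \mathrm{obj_P} \;\leq\; \mathrm{obj_{P^*}}(\{h_k^*\}) - \mathrm{obj_P}(\{h_k^*\}),
\end{align*}
and now both sides involve the \emph{same} set of classifiers $\{h_0, h_1^*,\dots,h_K^*\}$, with only the evaluating distribution changing from $P_k^*$ to $P_k$. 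So the whole problem reduces to controlling, for a fixed classifier $h$, the quantity $|R_k^*(h) - R_k(h)|$.

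The second step is to express $|R_k^*(h) - R_k(h)|$ in terms of $\mathrm{disc}(P_k, P_k^*)$. Here I would use the hypothesis that for each pair $(P_k^*, P_k)$ there exists $h^{\dagger} \in \mathcal{H^D}$ with $R_k^*(h^{\dagger}) = R_k(h^{\dagger})$, together with symmetry and the triangle inequality of $\ell$. The standard discrepancy-distance argument (à la Mansour–Mohri–Rostamizadeh) gives, for any $h$,
\begin{align*}
R_k^*(h) &\leq R_k^*(h^{\dagger}) + D_k^*(h, h^{\dagger})
= R_k(h^{\dagger}) + D_k^*(h, h^{\dagger})\\
&\leq R_k(h^{\dagger}) + D_k(h, h^{\dagger}) + \mathrm{disc}(P_k, P_k^*)
\leq R_k(h) + 2\,\mathrm{disc}(P_k, P_k^*),
\end{align*}
where the last inequality again uses the triangle inequality for $\ell$ to bound $R_k(h^{\dagger}) \leq R_k(h) + D_k(h, h^{\dagger})$ and then $D_k(h,h^{\dagger}) - D_k^*(h,h^{\dagger}) \leq \mathrm{disc}$; a symmetric chain gives the reverse bound, so $|R_k^*(h) - R_k(h)| \leq 2\,\mathrm{disc}(P_k, P_k^*)$. (I would double-check whether the constant is $2$ or can be tightened; the final bound only needs something $O(\mathrm{disc})$.)

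The third step is bookkeeping: substitute this per-classifier, per-group bound into the difference $\mathrm{obj_{P^*}}(\{h_k^*\}) - \mathrm{obj_P}(\{h_k^*\})$. Writing out the objective, each group-$k$ term contains $R_k(h_0)$, $R_k(h_k^*)$ (with coefficient $-2$ after combining rationality and the $j=k$ envy term), and $\frac{1}{K}\sum_{j} R_k(h_j^*)$; swapping $R_k^*$ for $R_k$ in each slot costs at most $2\,\mathrm{disc}(P_k,P_k^*)$ per slot, and after averaging over $k$ (and over $j$ in the envy sum) the coefficients of the risk terms sum to a bounded constant. Careful accounting of these coefficients should yield the claimed factor $3\max_k \mathrm{disc}(P_k, P_k^*)$, analogous to the factor-$3$ that appeared in the proof of Theorem~\ref{thm:1}. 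The main obstacle I anticipate is not any single inequality but making sure the constant comes out exactly $3$: one must track that $h_0$ is literally the same classifier on both sides (so its contribution is handled identically), verify the existence-of-$h^{\dagger}$ assumption is used correctly for each $k$, and confirm that the optimality-of-$\{h_k\}$ step is legitimate (it requires that $\{h_k^*\}$ is a feasible competitor in the $\{P_k\}$-optimization, which holds since both optimize over the same class $\mathcal{H^D}$ with no distribution-dependent constraints). A secondary subtlety is whether the $\max_k$ (rather than an average) is forced: it is, because the worst group dominates each of the three aggregated risk-difference terms.
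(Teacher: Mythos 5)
Your high-level route is the same as the paper's: you pass to the common competitor $\{h^*_k\}$ via the optimality of $\{h_k\}$ under $\{P_k\}$ (this is exactly the paper's $\mathrm{obj_P^{\prime}}$ with $\mathrm{obj_P}\geq\mathrm{obj_P^{\prime}}$), then you try to transfer each group-specific risk from $P^*_k$ to $P_k$ through the discrepancy, using the triangle inequality and the assumed hypothesis with equal risks, and finally you aggregate. So this is not a different argument; the question is whether your version of the risk-transfer step and the final accounting actually close.

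They do not, as written. In your chain $R^*_k(h)\leq R_k(h^{\dagger})+D_k(h,h^{\dagger})+\mathrm{disc}(P_k,P^*_k)$, the last move substitutes $R_k(h^{\dagger})\leq R_k(h)+D_k(h,h^{\dagger})$, which gives $R_k(h)+2D_k(h,h^{\dagger})+\mathrm{disc}(P_k,P^*_k)$, not $R_k(h)+2\,\mathrm{disc}(P_k,P^*_k)$: the leftover $D_k(h,h^{\dagger})$ is a disagreement between two arbitrary hypotheses under $P_k$ and is in no way controlled by the discrepancy, so your per-hypothesis bound $|R^*_k(h)-R_k(h)|\leq 2\,\mathrm{disc}(P_k,P^*_k)$ is not established. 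The paper's proof is organized so that only the \emph{difference} of disagreements $|D_k(h,h^{\prime})-D^*_k(h,h^{\prime})|$ survives (which is precisely what $\mathrm{disc}$ bounds), while the labeling-function term $D_k(h^{\prime},f)-D^*_k(h^{\prime},f)$ is cancelled exactly by choosing $h^{\prime}$ to be the equal-risk hypothesis; your asymmetric chain produces $D_k(h,h^{\dagger})$ on one side with no matching $-D^*_k(h,h^{\dagger})$, so it cannot be absorbed. The final bookkeeping is also not a formality: writing $\delta_k(h)=R^*_k(h)-R_k(h)$, the group-$k$ term of $\mathrm{obj_{P^*}}-\mathrm{obj_P^{\prime}}$ equals $\delta_k(h_0)+\frac{1}{K}\sum_{j\neq k}\delta_k(h^*_j)-\bigl(2-\frac{1}{K}\bigr)\delta_k(h^*_k)$, whose coefficients sum in absolute value to $4-\frac{2}{K}$; with the paper's per-hypothesis bound $|\delta_k(h)|\leq\mathrm{disc}(P_k,P^*_k)$ this yields the stated constant $3$ only when $K=2$, and with your constant $2$ it would give $8-\frac{4}{K}$, which never reaches $3$. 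So to salvage the proposal you must redo the transfer step in the symmetric form used by the paper and then carry out the coefficient count explicitly rather than appeal to an analogy with Theorem~\ref{thm:1}.
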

\begin{proof}
    
Define $f: \mathcal{X} \to \mathcal{Y}$ as the labeling function, i.e., for each sample $(x,y)$, we have $f(x) = y$, then true risk $R_k(h)$ can be represented as $D_k(h, f)$. Define group-specific true risk on distribution $P_k^*$ as ${R}^*_k(h) = D^*_k(h,f) = \mathbb{E}_{X,Y |S=k} \left[\ell(h(X), Y) \right]$. Without lossing generality, we can assume $D_k(h, f) \geq D_k^*(h,f)$. For any classifier $h \in \mathcal{H^D}$ and group index $k$, according to the triangle inequality of the loss function, we have:

\begin{align*}
    & |R_k(h) - R_k^*(h)| = |D_k(h, f) - D_k^*(h, f)| \\
    \leq & \, |\left[ D_k(h, h^{\prime}) + D_k(h^{\prime}, f) \right] - \left[  D_k^*(h^{\prime}, f) - D_k^*(h, h^{\prime}) \right]| \\
    \leq & \, \left| D_k(h, h^{\prime}) - D_k^*(h, h^{\prime}) \right| + \left| D_k(h^{\prime}, f) - D_k^*(h^{\prime}, f) \right|, 
\end{align*}

where $h^{\prime}$ can be any hypothesis in $\mathcal{H^D}$. Since there exist $h^{\prime}$ such that $D_k^*(h^{\prime}, f) = D_k(h^{\prime}, f)$, and according to the definition of discrepancy distance, we can have:
$$
|D_k(h, f) - D_k^*(h, f)| \leq \mathrm{disc}(P_k, P_k^{*}).
$$
Let $\{h_k^*\}$ be the set of classifiers that attains the maximal true degree of fairness without harm $\mathrm{obj_{P^*}}$ under distribution $\{P_k^*\}_{k \in [K]}$.
Let $\mathrm{obj_{P}^{\prime}}$ be the true degree of fairness attained with the same set of classifiers $\{h_k^*\}$ under distribution $\{P_k\}_{k \in [K]}$ determined by our group classifier, i.e.,

\resizebox{.98\hsize}{!}
{$\displaystyle
$$\displaystyle\mathrm{{obj}_{P}^\prime} = \frac{1}{K} \sum_{k=1}^K \biggl\{ \left[ {R}_k({h}_0) - {R}_k(h_k^*) \right] + \frac{1}{K} \sum_{j =1}^{K} \left[ {R}_k(h_j^*) - {R}_k(h_k^*) \right]\biggr\}.$$ 
$}

According to our definition, $\mathrm{obj_P} \geq \mathrm{obj_P^\prime}.$
Then we can know that the difference between  $\mathrm{obj_{P^*}}$ and $\mathrm{obj_P}$ is upper bounded:
\begin{align*}
    \mathrm{obj_{P^*}} - \mathrm{obj_P}
    &\leq \mathrm{obj_{P^*}} - \mathrm{obj_P^\prime} \\
    &\leq 3 \, \underset{k}{\max} \, \mathrm{disc} (P_k, P_k^*)
\end{align*}


\end{proof}

Given a hypothesis class, distance metric $\mathrm{disc}(\cdot,\cdot)$ measures the difference between two distributions: with a higher distance, it is more likely that the same hypothesis has diverse outputs~\cite{distance}. 

Theorem~\ref{thm:2} shows that the performance gap between our method and the optimal models trained by partitioning groups based on sensitive attributes is bounded by the difference between $P_k$ and $P_k^{*}$. Suppose there is a ground truth function $\phi:\mathcal{X} \to \mathcal{S}$ such that $\phi(x)=s$ and $\phi\in \mathcal{H^G}$, the following result is straightforward.

\begin{corollary}
If $\phi\in \mathcal{H^G}$ and we can find a group classifier $\theta$ such that $\mathrm{disc}(P_k, P_k^{*})=0,\forall k$, then $\mathrm{obj_P} \geq \mathrm{obj_{P^*}}$.
\end{corollary}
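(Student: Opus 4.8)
The plan is to obtain this statement as an immediate consequence of Theorem~\ref{thm:2}. Fix the group classifier $\theta$ promised by the hypothesis, so that the induced group distributions $\{P_k\}_{k\in[K]}$ satisfy $\mathrm{disc}(P_k,P_k^{*})=0$ for every $k\in[K]$, and let $\mathrm{obj_P}$ denote the maximal true degree of fairness without harm attained under $\{P_k\}_{k\in[K]}$ exactly as defined above. First I would check that the premises of Theorem~\ref{thm:2} are in force for this $\theta$: the loss $\ell$ is symmetric and satisfies the triangle inequality by the standing assumption, and the remaining precondition---that for every $k$ there is some $h\in\mathcal{H^D}$ with $R_k^{*}(h)=R_k(h)$---is exactly where the assumption $\phi\in\mathcal{H^G}$ enters. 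Since $s=\phi(x)$ makes $S$ a deterministic function of $X$ and $\phi$ lies in $\mathcal{H^G}$, the sensitive-attribute partition is itself realizable by a group classifier; in particular, taking $\theta=\phi$ gives $P_k=P_k^{*}$ and hence $R_k(h)=R_k^{*}(h)$ for all $h\in\mathcal{H^D}$, so a $\theta$ meeting both $\mathrm{disc}(P_k,P_k^{*})=0$ and the precondition of Theorem~\ref{thm:2} always exists.

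With the hypotheses verified, I would simply invoke Theorem~\ref{thm:2}, which gives
\[
\mathrm{obj_{P^{*}}}-\mathrm{obj_P}\;\le\;3\,\max_{k}\,\mathrm{disc}(P_k,P_k^{*})\;=\;3\cdot 0\;=\;0,
\]
and rearranging yields $\mathrm{obj_P}\ge\mathrm{obj_{P^{*}}}$, which is the claim. If one prefers to emphasize the optimization viewpoint, the same conclusion follows because our algorithm maximizes the objective over group classifiers, so $\mathrm{obj_P}$ is at least the value attained by the specific $\theta$ with $\mathrm{disc}(P_k,P_k^{*})=0$, which by Theorem~\ref{thm:2} is at least $\mathrm{obj_{P^{*}}}$.

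I do not anticipate any genuine obstacle: the substantive content is already carried by Theorem~\ref{thm:2}, and the corollary is the degenerate case where its upper bound vanishes. The only point deserving a moment's care is confirming Theorem~\ref{thm:2}'s technical precondition (existence of a risk-matching hypothesis on each pair $P_k,P_k^{*}$), which is precisely the purpose of assuming $\phi\in\mathcal{H^G}$---it guarantees the sensitive-attribute partition is reproducible within the group-classifier hypothesis class, so the existence of a suitable $\theta$ (e.g.\ $\theta=\phi$) is not an additional assumption but a consequence.
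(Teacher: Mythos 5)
Your proposal is correct and is exactly the argument the paper intends: the corollary is stated as an immediate consequence of Theorem~\ref{thm:2}, obtained by setting $\max_k \mathrm{disc}(P_k,P_k^{*})=0$ so that $\mathrm{obj_{P^{*}}}-\mathrm{obj_P}\le 0$. Your extra care in verifying Theorem~\ref{thm:2}'s risk-matching precondition via $\phi\in\mathcal{H^G}$ (e.g.\ taking $\theta=\phi$, so $P_k=P_k^{*}$) is consistent with, and slightly more explicit than, the paper's unstated reasoning.
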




\section{Experiment}\label{sec:exp}

\begin{table}[tb]
    \centering
    \small
    \begin{tabular}{ll}
    \toprule
      \textbf{Hyperparameter} & \textbf{Values} \\
      \hline
     Batch size & 256, 1,024 \\
     Training iterations & 2, 3, 4\\
     Learning rate for group classifier & $10^{-4}$, $10^{-3}$\\
     Learning rate for decoupled classifiers & $10^{-2}$, $10^{-1}$\\
     Momentum for decoupled classifiers & 0.9\\
     Penalty parameter $\lambda$ & 0, 1, 10, 100 \\
   \bottomrule
\end{tabular}
\caption{Hyper-parameters used in experiments}
\label{tab:parameter}
\end{table}

\begin{table*}[tb]
    \centering
    \small
    \begin{tabular}{lccccc}
    \toprule
      \textbf{Hyperparameter} & \textbf{Adult} & \textbf{Arrest} & \textbf{Violent} & \textbf{German} & \textbf{Bank} \\
      \hline
     Batch size & 1,024 & 1,024 & 1,024 & 256 & 1,024\\
     Training epochs & 3 & 3 & 3 & 3 & 2 \\
     Learning rate for group classifier & $10^{-3}$ & $10^{-3}$ & $10^{-3}$ & $10^{-3}$ & $10^{-3}$ \\
     Learning rate for decoupled classifiers & $10^{-2}$ & $10^{-2}$ & $10^{-2}$ & $10^{-2}$ & $10^{-2}$ \\
     Momentum for decoupled classifiers & 0.9 & 0.9 & 0.9 & 0.9 & 0.9 \\
     Penalty parameter $\lambda$ & 10 & 10 & 10 & 100 & 10 \\
   \bottomrule
\end{tabular}
\caption{Final hyper-parameters selected for each dataset}
\label{tab:parameter_ds}
\end{table*}

\begin{figure*}[tb]
\centering
\begin{subfigure}[t]{0.23\linewidth}
    \centering
    \includegraphics[width=\linewidth]{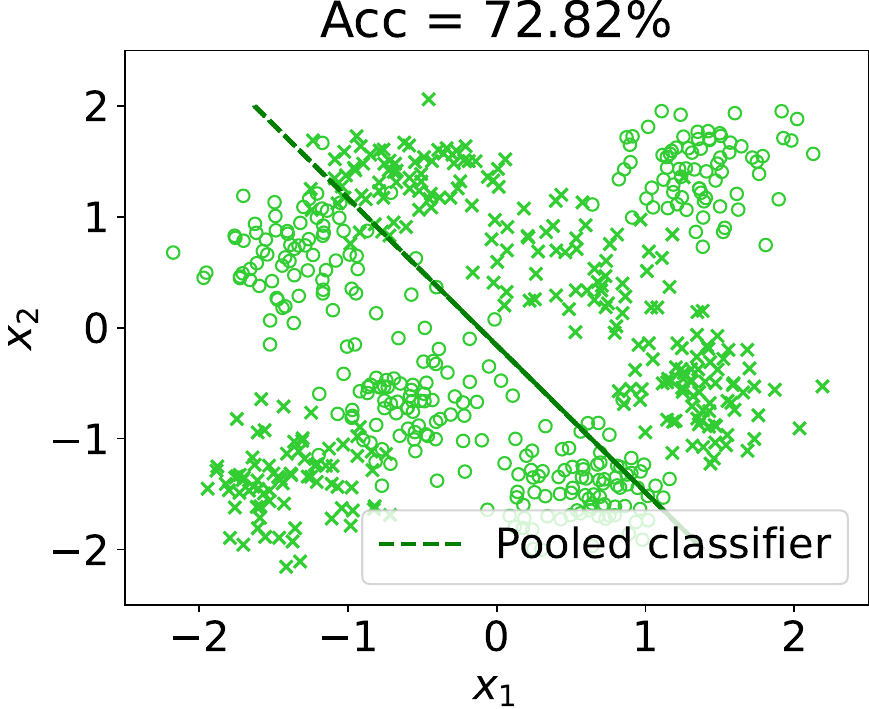}
    \caption{Without partition}
    \label{fig:syn_pol}
\end{subfigure}
\hfill
\begin{subfigure}[t]{0.23\linewidth}
    \centering
    \includegraphics[width=\linewidth]{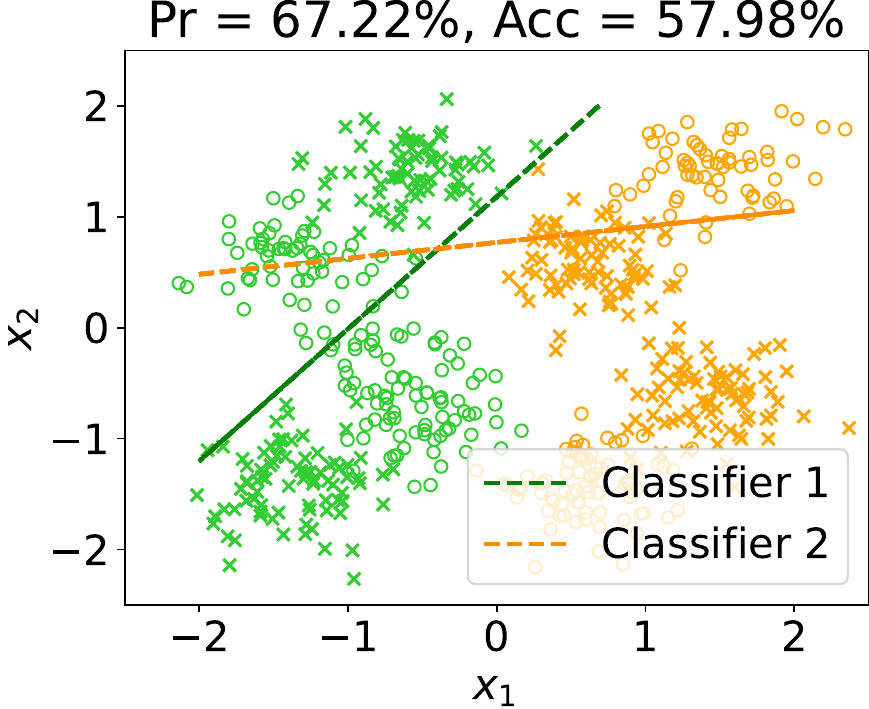}
    \caption{Partition by attribute $s_1$}
    \label{fig:syn_tp1}
\end{subfigure}
\hfill
\begin{subfigure}[t]{0.23\linewidth}
    \centering
    \includegraphics[width=\linewidth]{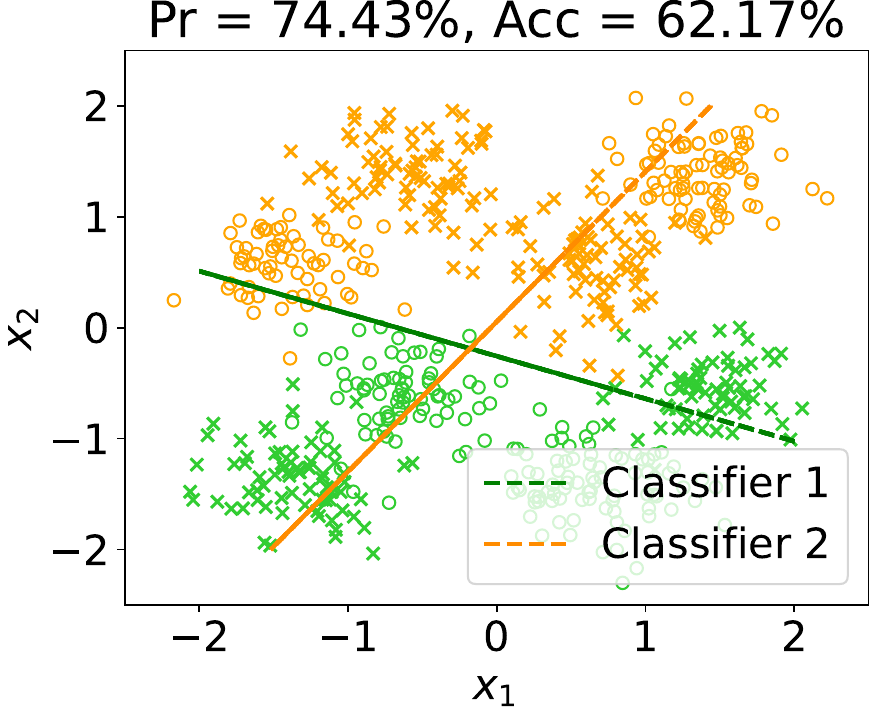}
    \caption{Partition by attribute $s_2$}
    \label{fig:syn_tp2}
\end{subfigure}
\hfill
\begin{subfigure}[t]{0.23\linewidth}
    \centering
    \includegraphics[width=\linewidth]{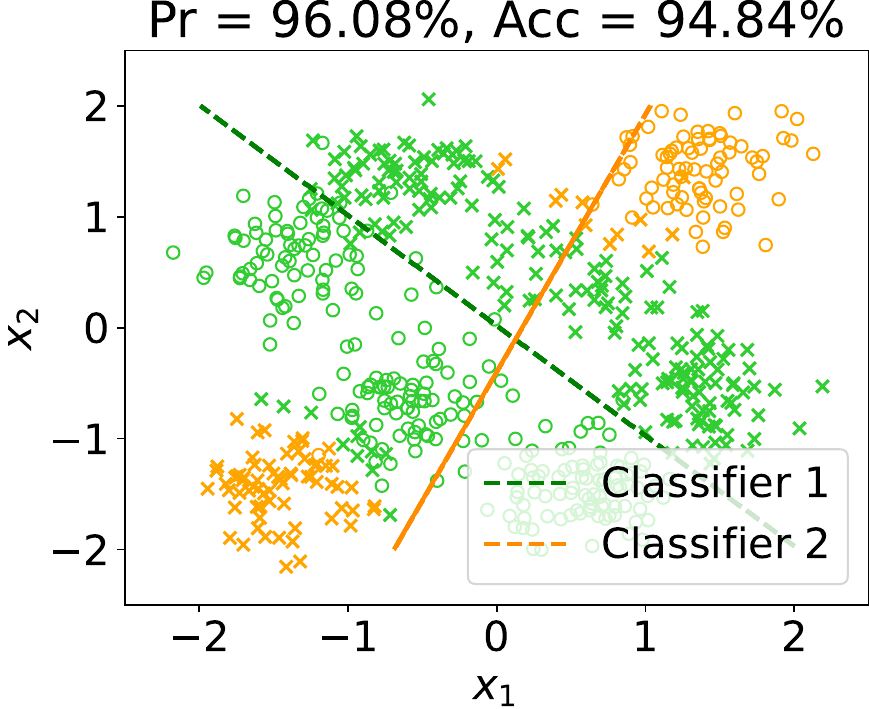}
    \caption{Finding optimal partition}
    \label{fig:syn_alg}
\end{subfigure}
\caption{Illustration of different methods on synthetic data. Circles (respectively, crosses) represent positive (respectively, negative) points. Lime (respectively, orange) points are mapped to the group associated with classifier denoted by the green (respectively, dark orange) line. Features $x_1$ and $x_2$ are correlated with $s_1$, $s_2$, and the label, so using a pooled classifier or dividing the group based on a single attribute, cannot achieve desirable performance. In contrast, our method can identify the optimal group partition, achieving the highest probability of fairness (denoted by ``Pr'') and accuracy (denoted by ``Acc'').}
\label{fig:syn}
\end{figure*}

\begin{table*}[tb]
    \centering
     \setlength{\tabcolsep}{1mm}  
    \begin{tabular}{*{8}{c}}
    \toprule
     & \multicolumn{2}{c}{DAFH (Ours)} & \multicolumn{2}{c}{Trivial partition} & \multicolumn{2}{c}{Pooled classifier} \\
     Dataset & $\Pr$ of w/o harm & Accuracy & $\Pr$ of w/o harm & Accuracy & $\Pr$ of w/o harm & Accuracy \\
     \midrule
     {Adult} & $90.70\% \pm 0.88\%$ & $\textbf{83.36\%} \pm 0.43\%$ & $91.64\% \pm 0.86\%$ & $80.70\% \pm 0.51\%$ & N/A & $81.29\% \pm 0.41\%$ \\
     {Arrest} & $\textbf{87.05\%} \pm 1.21\%$ & $\textbf{65.72\%} \pm 1.36\%$ & $78.60\% \pm 1.54\%$ & $62.01\% \pm 1.43\%$ & N/A & $63.84\% \pm 1.30\%$ \\
     {Violent} & $\textbf{91.74\%} \pm 1.49\%$ & $\textbf{81.91\%} \pm 1.53\%$ & $91.14\% \pm 1.78\%$ & $79.88\% \pm 1.94\%$ & N/A & $81.35\% \pm 1.58\%$ \\
     {German} & $\textbf{90.64\%} \pm 1.65\%$ & $\textbf{74.24\%} \pm 0.82\%$ & $78.88\% \pm 2.42\%$ & $65.92\% \pm 1.51\%$ & N/A & $70.72\% \pm 0.93\%$ \\
     {Bank} & $\textbf{97.71\%} \pm 0.34\%$ & $\textbf{90.20\%} \pm 0.10\%$ & $93.42\% \pm 0.72\%$ & $88.77\% \pm 0.55\%$ & N/A & $88.91\% \pm 0.54\%$ \\
   \bottomrule
\end{tabular}
\caption{Comparison with baselines (\textit{Trivial partition} and \textit{Pooled classifier}) on five real datasets:  
    ``$\Pr$ of w/o harm" measures the probability that samples in the testing set satisfy both rationality and envy-freeness, given the group partition and decoupled classifiers (\textit{Pooled classifiers} do not have group partition, hence we put ``N/A"). ``Accuracy" is the overall accuracy of classifiers on associated samples. Results show that we can outperform other baselines in both fairness and accuracy in most cases.}
    \label{tab:k2}
\end{table*}

\begin{figure}[tb]
    \centering
    \begin{subfigure}[t]{0.45\linewidth}
        \centering
        \includegraphics[width=\linewidth]{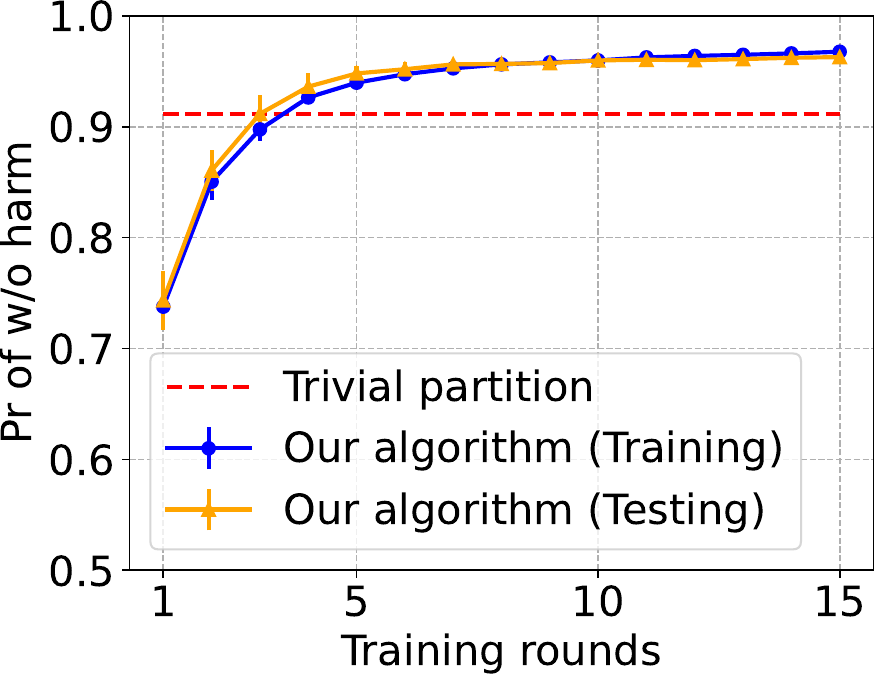}
        \caption{$\Pr$ of w/o harm}
        \label{fig:violent_harm}
    \end{subfigure}
    \hfill
    \begin{subfigure}[t]{0.45\linewidth}
        \centering
        \includegraphics[width=\linewidth]{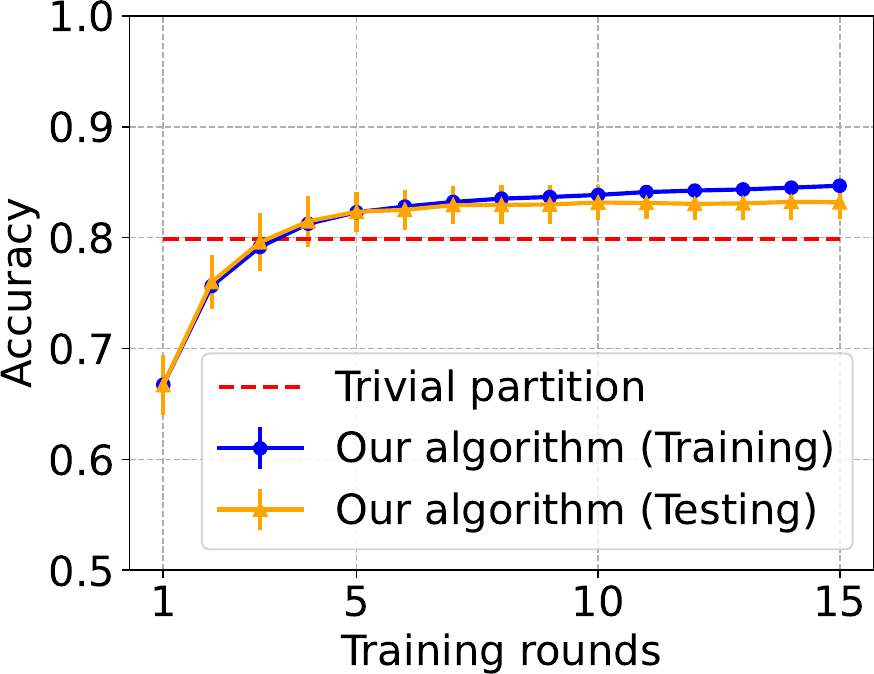}
        \caption{Accuracy}
        \label{fig:violent_acc}
    \end{subfigure}
    \caption{Convergence of our method on the Violent dataset: mean values of performance metrics during training, with error bars indicating standard deviation. Our algorithm gradually improves fairness without harm and accuracy, outperforming the \textit{Trivial partition} in fewer than 15 steps.}
    \label{fig:violent}
\end{figure}

\begin{figure}[tb]
    \centering
    \includegraphics[width=\linewidth]{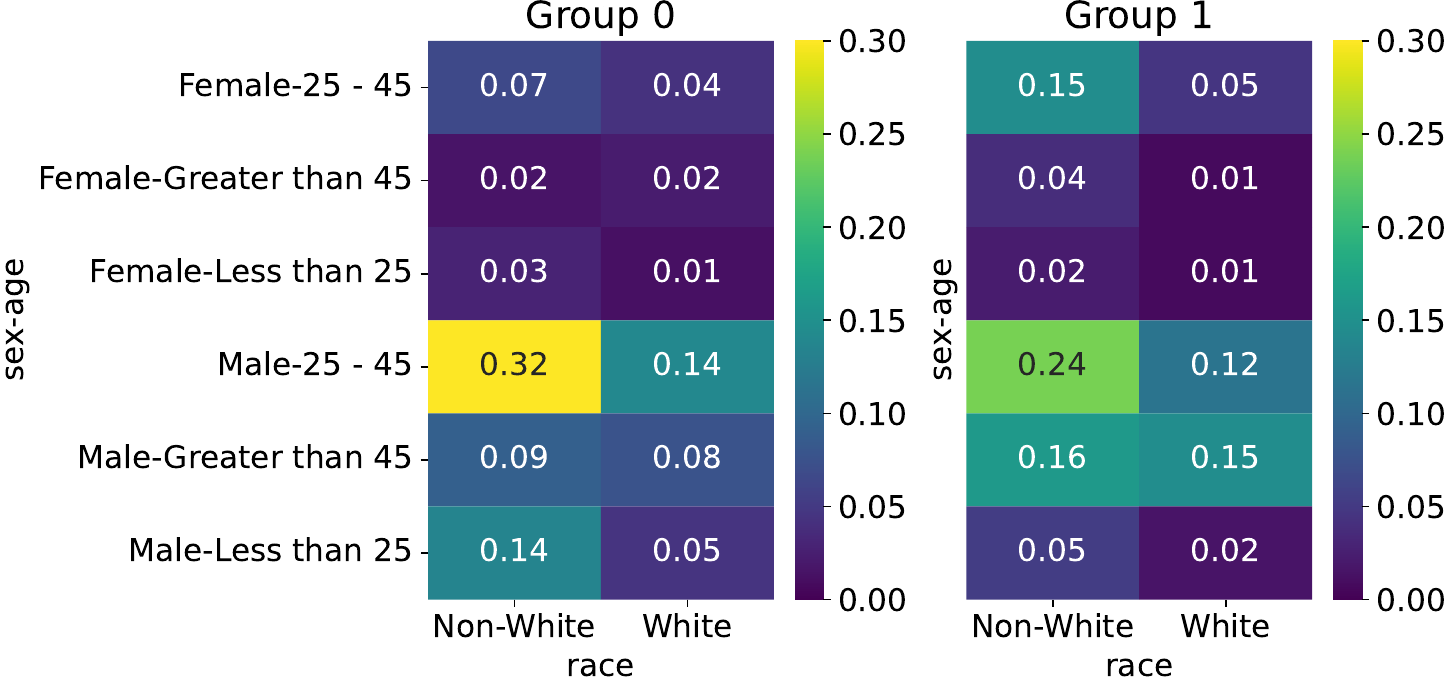}
    \caption{Physical meaning of the learned group partition on Arrest dataset: the group classifier is used to divide the dataset into two sets. The probability of different subgroups defined by 3 attributes race, sex, and age is visualized. Results show that different groups have diverse distribution on features.}
    \label{fig:physical}
\end{figure}

\begin{table}[tb]
    \centering
    \begin{tabular}{*{3}{c}}
    \toprule
      & $\Pr$ of w/o harm & Accuracy \\
      \midrule
     Manual partition & $81.39\% \pm 2.39\%$ & $62.54\% \pm 1.70\%$ \\
     Trivial partition & $78.60\% \pm 1.54\%$ & $62.01\% \pm 1.43\%$ \\
     Our algorithm & $85.72\% \pm 1.14\%$ & $64.72\% \pm 1.12\%$\\
   \bottomrule
\end{tabular}
\caption{Comparison with \textit{Manual partition} and \textit{Trivial partition} on Arrest dataset: we manually partition the dataset into 2 groups according to 3 attributes race, sex, and age, which mimics the physical meaning we observe in the group partition generated by our group classifier, and train decoupled classifiers associated with each group. Results show that manually partitioning the dataset based on our group classifier can help derive decoupled classifiers with higher performance, compared with trivially partitioning the dataset based on the chosen sensitive attribute.}
    \label{tab:manual}
\end{table}

\begin{table*}[tb]
    \centering
    \setlength{\tabcolsep}{1mm}
    \small
\renewcommand{\arraystretch}{1.2}
    \begin{tabular}{m{2.5em} m{5.5em} *{6}{c}}
    \toprule
     & & \multicolumn{2}{c}{$K=2$} & \multicolumn{2}{c}{$K=3$} & \multicolumn{2}{c}{$K=4$} \\
     Dataset & Algorithm & $\Pr$ of w/o harm & Accuracy & $\Pr$ of w/o harm & Accuracy & $\Pr$ of w/o harm & Accuracy \\
     \hline
     \multirow{2}{*}{Adult} & DAFH (Ours) 
     & $\textbf{90.70\%} \pm 0.88\%$ 
     & $\textbf{83.36\%} \pm 0.43\%$ 
     & $\textbf{94.55\%} \pm 1.31\%$ 
     & $82.82\% \pm 0.17\%$ 
     & $\textbf{93.98\%} \pm 0.47\%$ 
     & $80.98\% \pm 0.54\%$ \\
     & Clustering & $88.79\% \pm 0.58\%$ & $82.90\% \pm 0.28\%$ & $87.43\% \pm 0.53\%$ & $82.83\% \pm 0.23\%$ & $86.00\% \pm 1.19\%$ & $82.59\% \pm 0.52\%$  \\
     \hline
     \multirow{2}{*}{Arrest} & DAFH (Ours) 
     & $\textbf{87.05\%} \pm 1.21\%$ 
     & $\textbf{65.72\%} \pm 1.36\%$ 
     & $\textbf{88.06\%} \pm 1.06\%$ 
     & $\textbf{65.82\%} \pm 0.98\%$ 
     & $\textbf{88.78\%} \pm 1.18\%$ 
     & $\textbf{65.77\%} \pm 0.80\%$ \\
     & Clustering & $84.23\% \pm 1.43\%$ & $64.86\% \pm 0.89\%$ & $74.52\% \pm 2.43\%$ & $65.38\% \pm 0.81\%$ & $72.12\% \pm 1.84\%$ & $65.42\% \pm 0.38\%$  \\
     \hline
     \multirow{2}{*}{Violent} & DAFH (Ours) 
     & $\textbf{96.28\%} \pm 0.53\%$ 
     & $83.18\% \pm 1.53\%$ 
     & $\textbf{95.38\%} \pm 1.24\%$ 
     & $83.16\% \pm 1.69\%$ 
     & $\textbf{96.42\%} \pm 0.68\%$ 
     & $\textbf{84.54\%} \pm 0.74\%$ \\
     & Clustering & $87.96\% \pm 2.68\%$ & $83.82\% \pm 1.36\%$ & $85.27\% \pm 1.82\%$ & $83.58\% \pm 1.56\%$ & $84.02\% \pm 1.58\%$ & $83.64\% \pm 1.54\%$  \\
     \hline
     \multirow{2}{*}{German} & DAFH (Ours) 
     & $\textbf{90.64\%} \pm 1.65\%$ 
     & $\textbf{74.24\%} \pm 0.82\%$ 
     & $\textbf{90.00\%} \pm 2.16\%$ 
     & $\textbf{76.00\%} \pm 3.27\%$ 
     & $\textbf{89.12\%} \pm 1.32\%$ 
     & $\textbf{73.52\%} \pm 1.42\%$ \\
     & Clustering & $84.40\% \pm 1.12\%$ & $73.16\% \pm 2.38\%$ & $78.96\% \pm 2.43\%$ & $71.92\% \pm 1.51\%$ & $79.04\% \pm 2.05\%$ & $73.36\% \pm 1.69\%$  \\
     \hline
     \multirow{2}{*}{Bank} & DAFH (Ours) 
     & $\textbf{97.71\%} \pm 0.34\%$ 
     & $\textbf{90.20\%} \pm 0.10\%$ 
     & $\textbf{97.34\%} \pm 0.32\%$ 
     & $\textbf{90.06\%} \pm 0.28\%$ 
     & $\textbf{97.09\%} \pm 0.23\%$ 
     & $\textbf{90.11\%} \pm 0.30\%$ \\
     & Clustering 
     & $90.18\% \pm 0.21\%$ 
     & $88.92\% \pm 0.25\%$ 
     & $89.67\% \pm 0.25\%$ 
     & $89.07\% \pm 0.16\%$ 
     & $89.29\% \pm 0.43\%$ 
     & $88.84\% \pm 0.24\%$ \\
   \bottomrule
\end{tabular}
\caption{Comparison with \textit{Clustering} baseline on five real datasets when the number of decoupled classifiers $K=2, 3, 4$: results show that we can outperform \textit{Clustering} on both fairness and accuracy in most cases, indicating that our learned classifier can extract more information than merely the similarity in distance between different groups to improve performance.}
    \label{tab:multi_k}
\end{table*}

\begin{table*}[tb]
    \centering
    \begin{tabular}{m{3em} m{5em} *{4}{c}}
    \toprule
     Dataset & Metrics & Trivial Partition & LR-All & TreeLR & DAFH (Ours) \\
    \midrule
     \multirow{4}{*}{Adult} 
     & \# violations & 1 & 1 & 0 & \textbf{0} \\
     & max gain      &  2.9\% & 17.6\% & 4.1\% & \textbf{11.11\%} \\
     & min envy      & 2.9\% & 33.3\% & 11.8\% & 5.56\% \\
     & $\Delta$ disparity & 0.6\% & 0.4\% & -0.4\% & \textbf{-2.91\%} \\
     & \# models     & 2\ &  12  &  7  & 2 \\
     \midrule
     \multirow{4}{*}{Arrest} 
     & \# violations & 0 & 3 & 1 & \textbf{0} \\
     & max gain      & 7.7\% & 11.5\% & 9.0\% & \textbf{10\%} \\
     & min envy      & 7.7\% & 15.4\% & 12.8\% & \textbf{30\%} \\
     & $\Delta$ disparity & -0.1\% & 1.0\% & -2.7\% & \textbf{-7.9\%} \\
     & \# models     &  2  &  6  &  4  & 2 \\
     \midrule
     \multirow{4}{*}{Violent} 
     & \# violations & 1 & 0 & 2 & \textbf{0} \\
     & max gain      & 9.3\% & 13.6\% & 9.3\% & 10.61\%  \\
     & min envy      & 7.6\% & 17.0\% & 7.6\% & 13.64\%  \\
     & $\Delta$ disparity & -9.7\% & -14.0\% & -9.8\% & \textbf{-10.29\%}  \\
     & \# models     & 2 &  6  &  2  & 2 \\
   \bottomrule
\end{tabular}
\caption{Comparison with algorithms proposed in \citet{wo_harm} which rely on sensitive attributes to generate group partition. Metrics that are higher than those of other algorithms achieving the same \# of violations on each dataset are boldfaced. Results show that we can ensure no rationality or envy-freeness violation across all three datasets. We can also achieve lower $\Delta$ disparity, indicating that our algorithm can satisfy parity-based fairness criteria even without access to sensitive attributes.}
    \label{tab:compare_alg}
\end{table*}

\subsection{Dataset}
\paragraph{Synthetic dataset.} We generate the synthetic dataset as follows. Each data point has two sensitive attributes $s_1, s_2 \in \{\pm 1\}$, two features $x_1, x_2$, and label $y \in \{\pm 1\}$. The synthetic dataset comprises 20,000 samples, with each sample being randomly assigned with sensitive attributes and a label. Features are associated with both sensitive attributes and the label:
\begin{align*}
    x_1 & \sim \mathcal{N} (s_1 + \Delta s_1 s_2 y, \sigma), \\
    x_2 & \sim \mathcal{N} (s_2 + \Delta s_1 s_2 y, \sigma),
\end{align*}
where $\Delta = 0.4, \sigma = 0.3$. The synthetic dataset simulates a setting where features can act as proxy variables of sensitive attributes, and different subgroups divided by two sensitive attributes follow different distributions. A visualization of a randomly sampled subset of the dataset is depicted in Fig.~\ref{fig:syn}, where points with positive (resp. negative) labels are denoted by circles (resp. crosses), and different colors correspond to groups divided based on sensitive attributes (or the group classifier in our method).

\paragraph{Real datasets.} We use five real datasets, as detailed below:

\begin{enumerate}

\item \textbf{Adult}~\cite{Adult}: Census data to predict whether an individual's income exceeds \$50K per year. It includes 48,842 instances, each has 14 features and we choose \texttt{race} as the sensitive attribute. 

\item \textbf{Arrest}~\cite{compas}: Data used by the COMPAS algorithm to score the criminal defendants' risk of recidivism. It includes 7,214 instances, each has 52 features and we choose \texttt{race} as the sensitive attribute.

\item \textbf{Violent}~\cite{compas}: Data used by the COMPAS algorithm to predict the criminal defendants' risk of violent recidivism. It includes 4,743 instances, each has 53 features and we choose \texttt{race} as the sensitive attribute.

\item \textbf{German}~\cite{German}: Credit data used to predict individuals' credit risks (as high or low). It includes 1,000 instances, each has 20 features and we choose \texttt{sex} as the sensitive attribute. 

\item \textbf{Bank}~\cite{Bank}: The dataset focuses on bank marketing campaigns conducted via phone calls, aiming to predict whether clients will subscribe to a bank's term deposit product.  It includes 45,211 instances, each has 16 features and \texttt{age} is the sensitive attribute. 
\end{enumerate}

\subsection{Baseline Methods} We compare our method with the following baselines:  
\begin{enumerate}
\item \textit{Pooled classifier} is the classifier trained on the complete training set through structural risk minimization, and is applied to all samples in the testing set.   
\item \textit{Trivial partition} first splits each dataset according to the chosen sensitive attribute into two groups, and then trains decoupled classifiers accordingly through structural risk minimization. Typically it is restricted to the setting when the number of assigned groups $K=2$. 
\item \textit{Clustering} first partitions the dataset through a clustering algorithm (e.g., K-means in our experiments), then trains decoupled classifiers based on each partitioned set through structural risk minimization. 
\item \textit{LR-All}~\cite{wo_harm} partitions the dataset by the intersection of multiple attributes, and trains decoupled classifiers for each group.
\item \textit{TreeLR}~\cite{wo_harm} recursively grows a tree to find the group partition that best fulfills rationality and envy-freeness, with each group being assigned a decoupled classifier.
\end{enumerate}
In all baselines, we use the logistic regression as decoupled classifiers, the same as our method for fair comparison.

\subsection{Experimental Setup}
\paragraph{Implementation details.} Experiments are conducted on a personal computer with Intel Core i5-10210U CPU, 16 GB RAM, and Windows 11 operating system. Our algorithm and baselines are implemented with PyTorch 2.1.0 and scikit-learn 1.3.2. 
We use multi-layer perceptron with a hidden layer of 100 neurons as the group classifier, and logistic regression models as decoupled classifiers. Because our method does not require access to individuals' demographic information, we remove the sensitive attributes from input features when training our models. We iterate over all samples in the training set to build the computation graph of the objective function, automatically calculate the gradients for classifier parameters using autograd, and update parameters using stochastic gradient ascent to maximize the objective.

\paragraph{Evaluation metrics.} We measure the performance of our algorithm by the following metrics: 
\begin{enumerate}
\item \textbf{Probability of fairness without harm}: It is the probability that a sample satisfies both rationality and envy-freeness. Given a group classifier $\theta$, pooled classifier $h_0$, and decoupled classifiers $\{h_k\}_{k\in[K]}$, a sample $(x_i,y_i)$ satisfies both rationality and envy-freeness if the 0-1 loss of the assigned classifier $h_{\theta(x_i)}$ is not higher than the pooled classifier $h_0$ and any other decoupled classifier $h_k$, i.e., $\ell(h_{\theta(x_i)}(x_i), y_i) \leq \ell(h_k(x_i), y_i)$, $\forall k \in \{0\}\cup [K]$. The probability of fairness without harm equals the number of samples that satisfy fairness without harm divided by the size of the dataset. 
\item \textbf{Accuracy} of decoupled classifiers measures the proportion of samples in the dataset that are correctly classified by the assigned classifier, i.e, $\Pr(h_{\theta(X)}(X)=Y)$, given a group classifier $\theta$ and decoupled classifiers $\{h_k\}_{k\in[K]}$.
\item \textbf{\# of violations} is the number of groups that violates rationality, plus the number of pairwise comparison among all groups that violate envy-freeness.
\item \textbf{max gain} is the maximal difference between the true risk of pooled classifier and decoupled classifiers, $\max_{k \in [K]} ( R_k(h_0) - R_k(h_k) )$, which is the higher the better.
\item \textbf{min envy} is the maximal difference between the true risk of decoupled classifiers and the assigned classifier, $\max_{k, k^\prime \in [K]} ( R_k(h_{k^\prime}) - R_k(h_k) )$, which is the higher the better.
\item \textbf{$\Delta$ disparity} is the disparity when using decoupled classifiers minus the disparity of using pooled classifier only, where disparity is the maximal difference in accuracy between different groups, which is the lower the better.
\end{enumerate}
We randomly split each dataset into 75\%-25\% training-testing set for 5 times. For the first two metrics, we report the mean and standard deviation of metrics. For the following four metrics, we report the results when the optimal accuracy is achieved on the testing set. 
\paragraph{Hyperparameter selection.} We test all combinations of hyperparameter values shown in Table~\ref{tab:parameter}, and choose the combination that can attain the highest probability of fairness without harm on each dataset to show the results. Notably, we choose different learning rates to update parameters of group classifier and decoupled classifiers. We use the same learning rate to update parameters of the pooled classifier and decoupled classifiers in both our algorithm and other baselines. The chosen hyper-parameters on each dataset are as shown in Table~\ref{tab:parameter_ds}.

\subsection{Results}

\paragraph{Synthetic data.}

Fig.~\ref{fig:syn} evaluates our method on 
synthetic data. The performance metrics for each method are shown in the title of each figure.
In Fig.~\ref{fig:syn_pol}, a pooled classifier (green line) is applied to all samples. Since the distribution between different groups differs significantly, a single classifier cannot achieve desirable accuracy. In Fig.~\ref{fig:syn_tp1} and Fig.~\ref{fig:syn_tp2}, we partition the dataset based on sensitive attributes $s_1$ and $s_2$, respectively, and train decoupled classifiers on the divided groups. In particular, lime (resp. orange) points represent points belonging to Group 1 (resp. Group 2) and are assigned with the classifier denoted by the green (resp. dark orange) line. Since the difference in distribution results from both $s_1$ and $s_2$, partitioning groups based on only one of them leads to inferior fairness and accuracy. In contrast, by using an additional group classifier to search for the optimal partition, our algorithm shown in Fig.~\ref{fig:syn_alg} can attain the best fairness and accuracy. This further indicates that our method can effectively tackle intersectional group fairness and has greater capacity to explore data heterogeneity to enhance fairness and accuracy.

\paragraph{Real data.}
Table~\ref{tab:k2} compares our method with \textit{Trivial partition} and \textit{Pooled classifier} when the number of decoupled classifiers $K=2$. Results show that we can achieve a comparable or even higher probability of fairness without harm compared with \textit{Trivial partition} on most datasets, indicating that our algorithm can find better group partition to capture the data heterogeneity. Because the \textit{Pooled classifier} is applied to the whole dataset without any partitioned groups, we put ``N/A" in Table~\ref{tab:k2}. Moreover, our method can also achieve higher accuracy than \textit{Trivial partition} and \textit{Pooled classifier}, which means that our algorithm can increase the overall accuracy by exploring the data heterogeneity, even when we did not directly optimize for the accuracy metric.


Fig.~\ref{fig:violent} further shows the convergence of our algorithm on Violent dataset, where we use dashed horizontal lines to indicate the \textit{Trivial partition} and use solid polylines to denote ours. We report the mean values with errorbars denoting standard deviation of 5 repeated experiments. Results show that our algorithm improves fairness without harm and accuracy on both training and testing sets by maximizing the objective function, and can outperform \textit{Trivial partition} in less than 15 steps of mini-batch gradient ascent operations, with each batch containing 1,024 samples.

We explore the group partition generated by the learned group classifier based on the Arrest dataset, to find whether there are any physical meaning, as depicted in Fig.~\ref{fig:physical}. We evaluate the proportions of different subgroups of people in the two generated groups, with each subgroup being defined by the intersection of three attributes: race (`White' or `Non-White'), sex (`Female' or `Male'), and age (`Less than 25', `25 - 45', or `Greater than 45'). Notably, the proportion of non-white male less than 45 in Group 0 is especially higher than that in Group 1. The proportion of non-white female from 25 to 45 in Group 1 is especially higher than that in Group 0. 
To test whether these insights can enhance fairness and accuracy, we manually partition the Arrest dataset into two groups.
We assign all non-white males under 45 to one group, all non-white females aged 25 to 45 to another group, and randomly distribute the remaining individuals between these groups. We compare this \textit{Manual partition} algorithm with \textit{Trivial partition} baseline and also our algorithm. Results in Table~\ref{tab:manual} show that \textit{Manual partition} achieves a higher probability of fairness without harm and accuracy compared with \textit{Trivial partition}, which indicates that using a single sensitive attribute is insufficient for capturing the nuances between different groups. Our \textit{Manual partition} approach, which also considers age and sex in addition to race, underscores the importance of the intersectionality of attributes. While \textit{Manual Partition} does not outperform our algorithm, which leverages a broader set of attributes through the group classifier, it still shows significant benefits.

We also test the performance of our algorithm on the same datasets when the number of groups $K$ exceeds 2. Partitioning datasets into more than two groups based on sensitive attributes can result in some training sets being too small, potentially leading to overfitting. Therefore, we use \textit{Clustering} baseline to generate a more balanced group partition. 
Results in Table~\ref{tab:multi_k} show that our algorithm outperforms the \textit{Clustering} baseline in most cases in terms of probability of fairness without harm and accuracy, indicating that our algorithm can extract more information than merely similarity between different samples, thereby forming better group partition and increasing fairness and accuracy.

Finally, we compare our algorithm with three algorithms proposed in \citet{wo_harm}: \textit{Trivial Partition} (referred as \textit{LR (One attribute)} in~\citet{wo_harm}), \textit{LR-All}, and \textit{TreeLR}. We record the values of four metrics when each algorithm achieves optimal testing accuracy, evaluated across different numbers of models. The results in Table~\ref{tab:compare_alg} demonstrate that our approach ensures no violations of rationality or envy-freeness across all datasets, i.e., every group satisfies fairness without harm, a standard that other algorithms fail to meet. Additionally, compared to algorithms with the same number of violations, our DAFH algorithm achieves lower $\Delta$ disparity, highlighting its capability to satisfy parity-based fairness criteria. While other algorithms may achieve higher maximum gain or lower minimum envy, this is likely due to their ability to leverage sensitive attributes, enabling them to better capture groups with differing data distributions.

\section{Conclusion}\label{sec:conclusion}

In this work, we propose a novel approach to achieve fairness without harm, a preference-based fairness notion, without acccessing demographic data. Differs from parity-based fairness concept, we do not have to trade accuracy for fairness. Instead, we can train decoupled classifiers for divided groups and ensure each group prefer their own assigned classifier in terms of accuracy. Previous works require access to sensitive attributes to form group partition, which is not always feasible. Therefore, we propose the first approach that can achieve fairness without harm while not knowing sensitive attributes. Experiments show that our algorithm can achieve superior fairness and accuracy compared with baselines which have access to sensitive attributes. 

\section*{Acknowledgments}
This material is based upon work supported by the U.S. National Science Foundation under award IIS2202699, IIS-2416895, IIS-2301599, and CMMI-2301601, and by OSU President’s Research Excellence Accelerator Grant, and grants from the Ohio State University’s Translational Data Analytics Institute and
College of Engineering Strategic Research Initiative.
\bibliography{ref}

\end{document}